\documentclass[letterpaper, 10 pt, conference]{ieeeconf}  

\IEEEoverridecommandlockouts                              
\overrideIEEEmargins    
\pdfoutput=1

\usepackage{multicol}
\usepackage{times}

\usepackage[font=small]{caption}
\usepackage{lipsum}
\usepackage{amsmath, amssymb}
\usepackage{graphicx}  
\usepackage{ltablex, multirow}
\usepackage{algorithm}
\usepackage[noend]{algpseudocode}
\usepackage{subcaption} 
\usepackage{cleveref}
\usepackage{amsfonts,bbold}
\usepackage{url}
\usepackage{amsthm}
\usepackage{booktabs}

\usepackage{enumitem}

\DeclareMathOperator*{\argmin}{arg\,min}

\usepackage{color}

\newtheorem{thm}{Theorem}[section]

\newcommand{\xpi}[1]{#1_{p_i}}
\newcommand{\xhj}[1]{#1_{h_j}}

\newcommand{\tim}[1]{\textrm{#1}}
\newcommand{\mc}[1]{\mathcal{#1}}
\newcommand{\bb}[1]{\mathbb{#1}}
\newcommand{\sota}{state-of-the-art\ }

\newcommand{\p}{p_i}
\newcommand{\h}{h_j}
\newcommand\Tstrut{\rule{0pt}{2ex}}         
\linespread{0.89}

\begin{document}
\title{RoadTrack: Realtime Tracking of Road Agents in Dense and Heterogeneous Environments }
\author{Rohan Chandra$^1$, Uttaran Bhattacharya$^1$, Tanmay Randhavane$^2$, Aniket Bera$^2$, and Dinesh Manocha$^1$\\
$^1$University of Maryland, $^2$University of North Carolina\\
\small{Supplementary Material at \url{https://gamma.umd.edu/ad/roadtrack}}
\vspace{-15pt}
}


\maketitle
\thispagestyle{empty}
\pagestyle{empty}
\begin{abstract}
We present a realtime tracking algorithm, RoadTrack, to track heterogeneous road-agents in dense traffic videos. Our approach is designed for dense traffic scenarios that consist of different road-agents such as pedestrians, two-wheelers, cars, buses, etc. sharing the road. We use the tracking-by-detection approach where we track a road-agent by matching the appearance or bounding box region in the current frame with the predicted bounding box region propagated from the previous frame. Roadtrack uses a novel motion model called the Simultaneous Collision Avoidance and Interaction (SimCAI) model to predict the motion of road-agents by modeling collision avoidance and interactions between the road-agents for the next frame. 
We demonstrate the advantage of RoadTrack on a dataset of dense traffic videos and observe an accuracy of 75.8\% on this dataset, outperforming prior \sota~tracking algorithms by at least 5.2\%. RoadTrack operates in realtime at approximately 30 fps and is at least 4$\times$ faster than prior tracking algorithms on standard tracking datasets.
\vspace{-5pt}
\end{abstract}

\IEEEpeerreviewmaketitle

\section{Introduction}
\label{sec1}
Tracking of road-agents on a highway or an urban road is an important problem in autonomous driving~\cite{teichman2011practical,ma2018autorvo} and related areas such as trajectory prediction~\cite{traPHic,chandra2019robusttp,chandra2019forecasting}. These road-agents may correspond to large or small cars, buses, bicycles, rickshaws, pedestrians, moving carts, etc. Different agents have different shapes, move at varying speeds, and their underlying dynamics constraints govern their trajectories. Furthermore, the traffic patterns or behaviors can vary considerably between highway traffic, sparse urban traffic, and dense urban traffic with a variety of such heterogeneous agents, \textit{e.g.,} in Figure~\ref{cover}. The traffic density can be defined based on the number of distinct road-agents captured in a single frame of the video or the number of agents per unit length of the roadway.

Given a traffic video, the tracking problem corresponds to computing the consistency in the temporal and spatial identity of all agents in the video sequence. Recent developments in autonomous driving and large-scale deployment of high-resolution cameras for surveillance has generated interest in the development of accurate tracking algorithms, especially in dense scenarios with a large number of heterogeneous agents. The complexity of tracking increases in dense scenarios as different types of road-agents come in close proximity and interact with each other. 
Examples of such interactions include passengers boarding or deboarding buses, bicyclists riding alongside cars and so on. Such traffic scenarios arise frequently in densely populated metropolitan cities.

\begin{figure}
\centering
\includegraphics[width = .8\columnwidth]{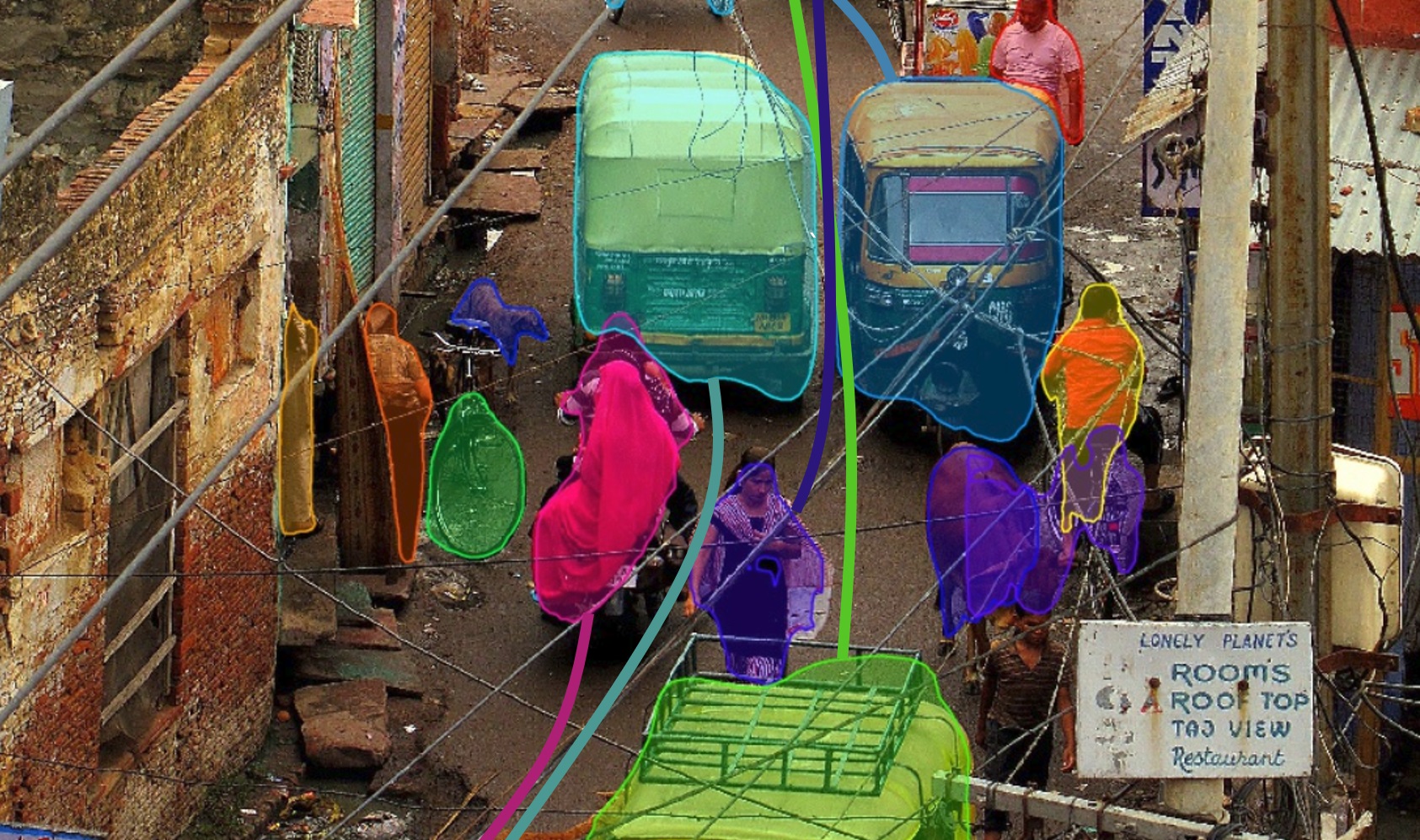}
  \caption{We highlight the performance of our tracking algorithm, RoadTrack, in this urban video. This frame consists of 27 road-agents, including pedestrians, two-wheel scooters, three-wheel rickshaws, cars, and bicycles. RoadTrack can track agents with 75.8\% accuracy at approximately 30 fps on an TitanXp GPU. We observe average improvement of at least $5.2$\% in MOTA accuracy and 4$\times$ in fps over prior methods.} 
  \label{cover}
  \vspace{-18pt}
\end{figure}

There is extensive prior work on tracking objects and road-agents~\cite{tracking1,tracking2}. But they are mostly designed and used in scenarios with sparse or lower density of road-agents. Such methods are unable to perform tracking in dense traffic due to occlusions and other challenges. In this paper, we mainly focus on developing efficient algorithms for dense traffic scenarios with heterogeneous interactions.

Recently, techniques based on deep learning are widely used for object detection and tracking.
In order  to solve the tracking problem in dense and heterogeneous traffic scenarios, we require a motion model that can account for interactions among heterogeneous agents and the high density in which these agents move. We adopt the tracking-by-detection paradigm, which is a two-step process of object detection and state prediction using the motion model. The first step, object detection, is performed to generate vectorized representations, called features, for each road-agent that facilitate identity association across frames. The second step is to predict the state (position and velocity) for the next frame using a motion model.

\textbf{Main Contributions:}  We present a realtime tracking algorithm, called RoadTrack, to track heterogeneous road-agents in dense videos. RoadTrack uses a new motion model to represent the motion of different road-agents by simultaneously accounting for collision avoidance and pairwise interactions. We show it is better suited for dense and heterogeneous traffic scenes in comparison to linear constant velocity, non-linear, and learning-based motion models. We name this motion model, SimCAI (``Simultaneous Collision Avoidance and Interaction (SimCAI)").

RoadTrack makes no assumption regarding camera motion and camera view. For example, we show our algorithm can track road-agents in heavy traffic captured from both front view and top view cameras that can be either stationary or moving. We further do not make assumptions for lighting conditions and can even track road-agents during night-time with glare from oncoming traffic (see supplementary video).

\textbf{Main Benefits: }The advantages of using RoadTrack are summarized below:
\begin{enumerate}
    \item \textbf{Accuracy:} On a dense traffic dataset, RoadTrack is \sota with an absolute accuracy of 75.8\%. This is an increase of 5.2\% over the next best method. This is equivalent to a rank difference of 42 with the next best method on the current \sota tracking benchmark dataset~\cite{mot16}. 
    
    \item \textbf{Speed:} Our method demonstrates realtime performance at approximately 30 fps on dense traffic scenes containing up to 100 agents per frame as well as standard tracking datasets. All results were obtained on a TITAN Xp GPU with 8 cores of CPU at 3.6 Ghz frequency. On the MOT benchmark, RoadTrack is at least 4$\times$ faster than SOTA methods, and on the dense traffic dataset, it is comparable to the fastest SOTA method. 
\end{enumerate}
\vspace{-10pt}



\section{Related Work}
\label{sec2}
\subsection{Pedestrian and Vehicle Tracking}
There is extensive work on pedestrian tracking \cite{cui2005tracking,kratz_tracking_2011}. Bruce et al. \cite{bruce2004better} and Gong et al. \cite{gong} predict pedestrians' motions by estimating their destinations. Liao et al.~\cite{liao2003voronoi} compute a Voronoi graph from the environment and predict the pedestrian motion along the edges. Mehran et al.~\cite{mehran_abnormal_2009} apply the social force model to detect anomalous pedestrian behaviors from videos. Pellegrini et al.~\cite{pellegrini2009you} use an energy function to build a goal-directed short-term collision-avoidance motion model. Bera et al.~\cite{berareach, bera2016glmp} use reciprocal velocity obstacles and hybrid motion models to improve the accuracy. All these methods are specifically designed for tracking pedestrian movement.

Vehicle tracking has been studied in computer vision, robotics, and intelligent transportation. Some of the earlier techniques are based on using cameras~\cite{dellaert1997robust} and laser range finders~\cite{streller2002vehicle}. The authors of \cite{petrovskaya2009model} model dynamic and geometric properties of the tracked vehicles and estimate their positions using a stereo rig mounted on a mobile platform. Ess et al.~\cite{ess2010object} present an approach to detect and track vehicles in highly dynamic environments. Multiple cameras have also been used to perform tracking all surrounding vehicles~\cite{rangesh2018no,darms2008classification}. Moras et al.~\cite{moras2011credibilist} use an occupancy grid framework to manage different sources of uncertainty for more efficient vehicle tracking; Wojke et al.~\cite{wojke2012moving} use LiDAR for moving vehicle detection and tracking in unstructured environments. Finally,~\cite{coifman1998real} uses a feature-based approach to track the vehicles under varying lighting conditions. Most of these methods focus on vehicle tracking and do not take into account interactions with other road-agents such as pedestrians, two-wheelers, rickshaws etc. in dense urban environments. For an up-to-date review of tracking-by-detection algorithms, we refer the reader to methods submitted to the MOT benchmark~\cite{mot16}.

\vspace{-5pt}
\subsection{Motion Models for Tracking}
There is substantial work on tracking multiple objects and use of motion models to improve the accuracy~\cite{luber2010people, mht-lin1,edmt-lin3,lfnf-lin4, chandra2019densepeds, bera2014adapt}. 
Kim et al.~\cite{mht-lin1} perform Multiple Hypotheses Tracking (MHT) ~\cite{mht} by using an effective online classifier for efficient branch pruning. The constant velocity linear motion model has been used to join fragmented pedestrian tracks caused by occlusion~\cite{lfnf-lin4}. However, dense traffic often cause road-agents to perform complex maneuvers to avoid collisions that are often non-linear. Hence, linear motion models do not work well in dense scenes.

RVO~\cite{van2011reciprocal} is a non-linear motion model that has been used for pedestrian tracking in dense crowd videos. However, RVO does not take into account agents interacting with one another. An extension to RVO, called AutoRVO~\cite{ma2018autorvo} includes dynamic constraints between road-agents. However, AutoRVO is based on CTMAT~\cite{ctmat} representations of road-agents that cannot be translated to front-view scenes. Other non-linear motion models that have been used for tracking include social forces~\cite{helbing1995social}, LTA~\cite{pellegrini2009you}, and ATTR~\cite{yamaguchi2011you}. However, these are mainly designed for tracking pedestrians. Social Forces, in particular, holds resemblance to our proposed motion model, SimCAI, in that it models the attraction and repulsion between agents (pedestrians only) through the concept of potential energy functions. With the recent rise in popularity of deep learning, recurrent neural networks such as LSTMs have been used as motion models for tracking~\cite{online152,rtdl5-online153}. We compare SimCAI with both learning- and non-learning-based motion models in this paper.


     
\section{RoadTrack: Overview}
\label{sec4}

In this section, we present the RoadTrack algorithm that combines Mask R-CNN object segmentation with SimCAI. Informally, the tracking problem is stated as follows: Given a video, we want to assign an ID to all road-agents in all frames. This is formally equivalent to solving the following sub-problem at each time-step (or frame): At current time $t$, given the ID labels of all road-agents in the frame, assign labels for road-agents in the next frame (time $t+1$).  

We start by using Mask R-CNN to implicitly perform pixel-wise segmentation of the road-agents. This generates a set of segmented boxes~\cite{chandra2019densepeds}. For each detected road-agent, $h_j$, generated using Mask R-CNN, we extract their corresponding features, $\xhj{f}$, using the deep learning-based feature extraction architecture proposed in~\cite{chandra2019densepeds}. We do not use the provided pre-trained models and instead, fine-tune the existing feature extraction network on traffic datasets to learn meaningful features pertaining to traffic. We discuss the fine-tuned hyperparameters in the supplementary material. 

Next, we predict the next state (state consists of spatial coordinates ($p_i$) and velocities ($\nu_i$)) for each road-agent for the next time-step using SimCAI. This step is the main contribution of this work and is described in detail in Section~\ref{sec: simcai}. This step results in another set of segmented boxes for each road-agent at time $t+1$. 

Finally, we use these sets of segmented boxes to compute features using a Convolutional Neural Network~\cite{deepsort}. The features generated are compared using association algorithms~\cite{kuhn2010hungarian} to compute the ID of each agent in the next frame. The features are matched in two ways: the Cosine metric and the IoU overlap \cite{iou}. The Cosine metric is computed using the following optimization problem:

\begin{equation}
\min_{\h}(\mathnormal{l}(\xpi{f}, \xhj{f})| p_i \in \mc{P}, h_j \in \mc{H}_i).
\label{optim}
\end{equation}
where $\mc{H}_i$ is the subset of all detected road-agents in the current frame that are within a circular region around agent $p_i$ that have not been matched to a predicted agent. The IoU overlap metric is used in conjunction with the cosine metric. This metric builds a cost matrix $\Sigma$ to measure the amount of overlap of each predicted bounding box with all nearby detection bounding box candidates. $\Sigma(i,j)$ stores the IoU overlap of the bounding box of $\p$ with that of $\h$ and is calculated as:
\begin{equation*}
    \Sigma(i,j) = \dfrac{\bb{B}_{ \p} \cap \bb{B}_{\h}}{\bb{B}_{\p} \cup \bb{B}_{\h}},\h \in \mc{H}_i.
\end{equation*}
If we denote the cosine and the IOU overlap metrics by $C$ and $I$, respectively, then the combined cost function value is obtained through,
\begin{equation}
    \textrm{Combined Cost} = \lambda_1 C + \lambda_2 I, \lambda_1+\lambda_2 = 1,
\end{equation}
where $\lambda_1, \lambda_2$ are constants representing the weights for the individual metric costs. Matching a detection to a predicted measurement with maximum overlap thus becomes a max-weight matching problem and we solve it efficiently using the Hungarian algorithm~\cite{kuhn2010hungarian}. The ID of the road-agent at time $t$ is assigned to that road-agent at time $t+1$ whose appearance is most closely associated to the road-agent at time $t$.
\vspace{-5pt}

\section{SimCAI: Simultaneous Collision Avoidance and Interactions}
\label{sec: simcai}


One of the major challenges with tracking heterogeneous road-agents in dense traffic is that road-agents such as cars, buses, bicycles, road-agents, etc. have different sizes, geometric shape, maneuverability, behavior, and dynamics. This often leads to complex inter-agent interactions that have not been taken into account by prior multi-object trackers. 
Furthermore, road-agents in high-density scenarios are in close-proximity to one another or are almost colliding. So we need an efficient approach for predicting the next state of a road-agent by modeling the collisions and interactions. We thus present SimCAI, that takes into account both,
\begin{itemize}[noitemsep]
    \item Reciprocal collision avoidance~\cite{van2011reciprocal} with car-like kinematic constraints for trajectory prediction and collision avoidance.
    \item Heterogeneous road-agent interaction between pedestrians, two-wheelers, rickshaws, buses, cars and so on.
    \end{itemize}
    
All the notations used in the paper are provided in Table I of full version of this text~\cite{chandra2019roadtrack}. 

\subsection{Velocity Prediction by Modeling Collision Avoidance}
\label{sec3A}
Reciprocal Velocity Obstacles (RVO)~\cite{van2011reciprocal} extends Velocity Obstacles motion model by modeling collision avoidance behavior for multiple engaging agents. RVO can be applied to pedestrians in a crowd and we modify it to work with bounding boxes as our algorithm conforms to the tracking-by-detection paradigm.

We represent each agent as, $\Psi_{t} = [u,v,\Dot{u}, \Dot{v}, v_{\tim{pref}}]$,
where $u,v,\Dot{u},\dot{v}, \tim{and} \  v_{\tim{pref}}$ represent the top left corner of the bounding box, their velocities, and the preferred velocity of the agent in the absence of obstacles respectively. $v_{\tim{pref}}$ is computed internally by RVO. 



The computation of the new state, $\Psi_{t+1}$, is expressed as an optimization problem. For each agent, RVO computes a feasible region where it can move without collision. This region is defined according to the RVO collision avoidance constraints (or ORCA constraints~\cite{van2011reciprocal}). If the ORCA constraints forbid an agent's preferred velocity, that agent chooses the velocity  closest to its preferred velocity that lies in the feasible region, as given by the following optimization: 
\begin{equation}
    v_{\textrm{new}} = \argmin_{\substack{v \notin ORCA}} ||v - v_{\textrm{pref}}||
\end{equation}

The velocity, $v_{\textrm{new}}$, is then used to calculate the new position of a road-agent.

The difference in shapes, sizes, and aspect ratios of road-agents motivate the need to use appearance-based features. In order to combine object detection with RVO, we modify the state vector, $\Psi_{t}$, to include bounding box information by setting the position to the centers of the bounding boxes. Thus, $u = { \frac{u+w}{2}}$ and $v ={ \frac{v+h}{2}}$, where $w,h$ denote the width and height, respectively, of the corresponding bounding box.

Finally, the original RVO models the motion of agents seen from a top-view. Therefore, to account for front-view traffic as well as top-view, we use the modification proposed by the authors of~\cite{chandra2019densepeds} that allow RVO to model the motion of road-agents in front-view traffic scenes.

\subsection{Velocity Prediction by Modeling Road-Agent Interactions}
\label{sec3B}
In a traffic scenario, interactions can occur between different types of road-agents: vehicle-vehicle, pedestrian-pedestrian, vehicle-pedestrian, bicycle-pedestrian, etc. In this section, we present a formulation to model such interactions. Our input is an RGB video captured from a camera with known camera parameters. By using the camera center as the origin, we transform pixel coordinates to scene coordinates for the computations that follow in this section.


\subsubsection{\textbf{Intent of Interaction}}
\label{3B1} 
The idea of using spatial regions to characterize agent behavior was proposed in~\cite{hall1966hidden}. The authors introduced the notion of ``public" and ``social" regions, that are of the form of concentric circles. We show a quadrant of these regions in Figure~\ref{hti}, where the yellow area is the social region and the orange area is the public region. Based on this work, Satake et al. \cite{satake2009approach} proposed a model of approach behavior with which a robot can interact with humans. At the public distance the robot is allowed to approach the human to interact with them, and at the social distance, interaction occurs. In SimCAI, we have set the public and social distances heuristically. 

We say that a road-agent, $\p$, intends to interact with another agent, $p_k$, when $\p$ is within the social distance of $p_k$ for some minimum time $\tau$. When two road-agents intend to interact, they move towards each other and come in close proximity. 
    
\subsubsection{\textbf{Ability to Interact}}
\label{3B2}
Even when two road-agents want to interact, their movements could be restricted in dense traffic. We determine the ability to interact (Figure~\ref{hti}(right)) as follows. 

Each agent has a personal space, which we define as a circular region $\zeta$ of radius $\rho$, centered around $p_k$. Given a road-agent $\p$, the slope of its $v_{\tim{pref}}$ is $\tan{\theta}$. $\theta$ is the angle with the horizontal defined in the world coordinate system. In dense traffic, each agent, $\p$ has a limited space in which they can steer, or turn. This space is the feasible region determined by the ORCA constraints described in the previous section. We define a 2D cone, $\gamma$, of angle $\phi$ as the ORCA region in which the agent can steer. $\phi$ is thus the steering angle of the agent. We denote the extreme rays of the cone as $r_1$ and $r_2$. $\bot_{\bb{G_1}}^{\bb{G_2}}$ denotes the smallest perpendicular distance between any two geometric structures, say, $\bb{G_1}$ and $\bb{G_2}$. These parameters are fixed for different agent types and are not learned from data. 

If $\p$ has intended to interact with $p_k$, the projected cone of $\p$, defined by extending $r_1$ and $r_2$, is directed towards $p_k$
Then, in order for interaction to take place, it is sufficient to check for either one of two conditions to be true:

\begin{enumerate}[noitemsep]
    \item Condition $\Omega_1$: Intersection of $\zeta$ with either $r_1$ or $r_2$ (if either ray intersects, then the entire cone intersects $\zeta$).
    \item Condition $\Omega_2$: $\zeta \subset \gamma$ (if $\zeta$ lies in the interior of the cone, see Figure~\ref{hti}). 
\end{enumerate} 

\noindent For these conditions to hold, we require that the cone does not intersect or contain any $p_j \in \mc{P}, j \neq i$. We now make these equations more explicit.

We parametrize $r_1,r_2$ by their slopes $\tan \delta$, where $\delta = \theta_i + \phi_i$ if $\bot_\zeta^{r_1} \geq \bot_\zeta^{r_2}$, else $\delta = \theta_i - \phi_i$.
\begin{figure}[t]
    \centering
    \includegraphics[width = .9\columnwidth]{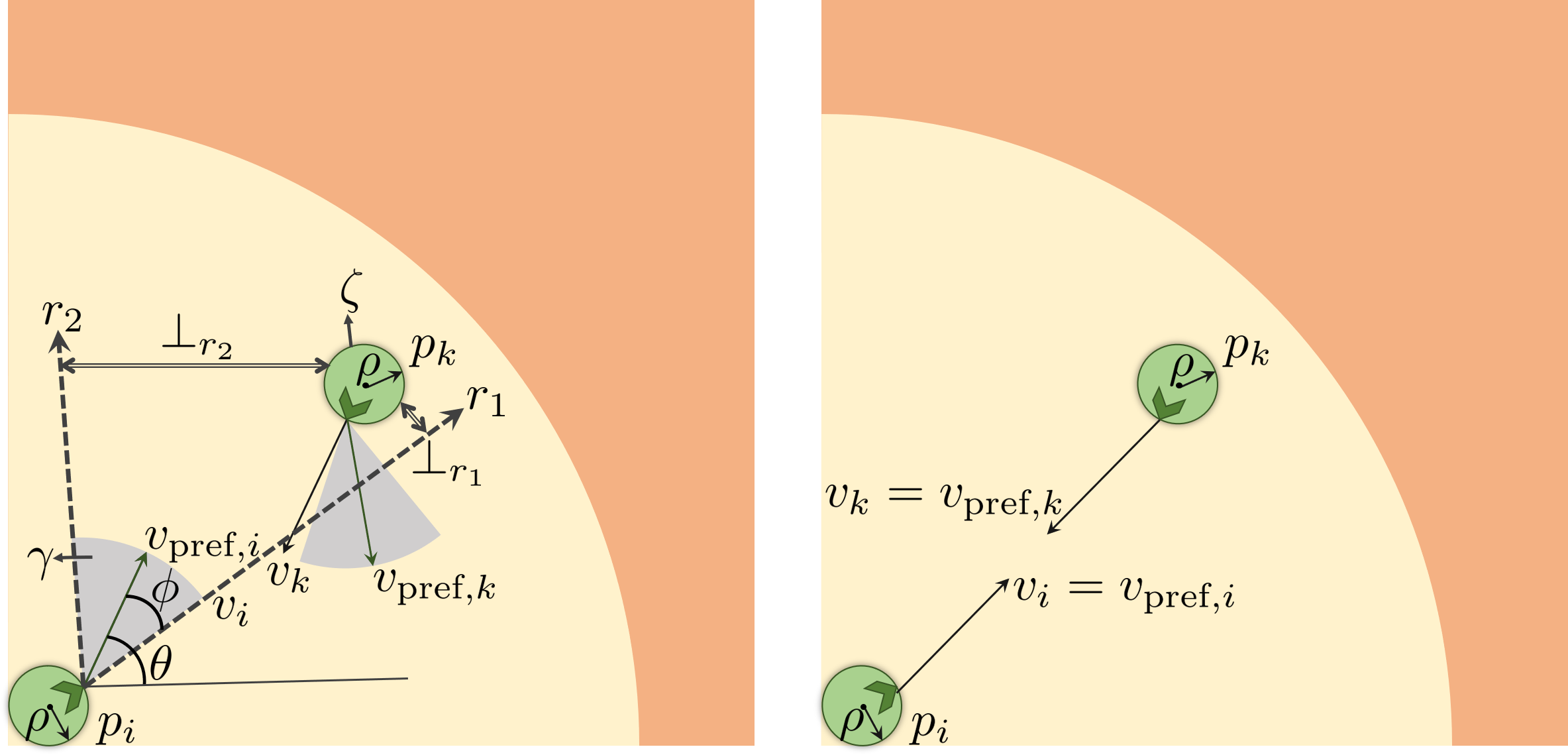}
    \caption{Inner yellow circle denotes the social distance and the outer orange area denotes the public region. At time $t \geq \tau$ and using \ref{3B1}, $\p$ intends to interact with $p_k$. Then using \ref{3B2} (left), $\p$ determines its ability to interact with $p_k$. We observe that $\gamma$ (grey cone) of $\p$ contains $\zeta$ of $p_k$ (green circle around $p_k$). Thus $\p$ can interact with $p_k$. Using \ref{3B3} (right), $\p$ and $p_k$ align their preferred velocities toward each other.}
    \label{hti}

\vspace{-15pt}
\end{figure}
The resulting equation of $r_1$ (or $r_2$) is $(Y - v_i) = \tan\delta(X - u_i)$ and the equation of $\zeta$ is $(X - u_k)^2 + (Y - v_k)^2 = \rho^2$. Solving both equations simultaneously, we obtain an equation $\Omega_1$.
Intersection occurs if the discriminant of $\Omega_1 \geq 0$. This provides us with the first condition necessary for the occurrence of an interaction between $\p$ and $p_k$.

Next, we observe that if $\zeta$ lies in the interior of $\gamma$, then $p_k$ lies on the opposite sides of $r_1$ and $r_2$ which is modeled by the following equation:
\begin{equation}
   \Omega_2 \equiv r_1(p_k).r_2(p_k) \leq 0
    \label{omega2}
\end{equation}
Solving Equation~\ref{omega2} further provides us with the second condition for the occurrence of an interaction between $\p$ and $p_k$,
where $\Omega_1, \Omega_2:\bb{R}^2 \times \bb{R}^2 \times \bb{R} \times \bb{R} \longmapsto \bb{R}$.

\subsubsection{\textbf{Interaction}}
\label{3B3}
If either $\Omega_1$ or $\Omega_2$ is true, then road-agents $\p, p_k$ will move towards each other to interact at time $t \geq \tau$. When this happens, we assume that $\p$ and $p_k$ align their current velocities towards each other. Thus, $v_\textrm{new} = v_\textrm{pref}$. The time taken for the two road-agents to be meet or converge with each other is given by $t = \dfrac{|| p_i - p_k ||_2}{|| v_i - v_k ||_2}$. If two road-agents are overlapping (based on the values of $\Omega_1$ and $\Omega_2$), we model them as a new agent with radius $2\epsilon$. 


Our approach can be extended to model multiple interactions. Currently, we restrict an interaction to take place between 2 road-agents. Therefore, in the case of multiple possible interactions with an agent, $p_k$, we form a set $\mc{Q} \subseteq \mc{P}$, where $\mc{Q}$ is the set of all road-agents $p_\omega$, that are intending to interact with $p_k$. We determine the road-agent that will interact with $p_k$ as the road-agent that minimizes the distance between $p_k$ and $p_\omega$ after a fixed time-step, $\Delta t$. Thus, $p_\omega = \argmin_{w} \lVert (p_\omega + v_\omega \Delta t) - p_k \rVert, p_\omega \in \mc{Q}$. road-agents that are not interacting avoid each other and continue moving towards their destination.

\subsection{Analysis}
\newcommand{\bigO}[1]{\mathcal{O}{#1}}
\label{sec3C}
We analyze the accuracy and runtime performance of SimCAI in traffic scenarios with increasing density and heterogeneity. 

\textbf{Accuracy Analysis:} We analytically show the advantage of SimCAI over other motion models such as Social Forces \cite{helbing1995social}, RVO \cite{van2011reciprocal, bera2014realtime}, and constant velocity~\cite{deepsort}. 


We denote the mutliple object tracking accuracy, $MOTA$ of a system using a particular motion model as ${MOTA}^{\rm{model}}$ and define it as ${MOTA}^{\rm{model}} = \sum_c {MOTA}_c + \sum_i {MOTA}_i$ where $c$ and $i$ denote an agent whose motion is being modeled using collision avoidance and interaction, and ${MOTA}_c$ and ${MOTA}_i$ denote their individual accuracies, respectively. Let $n$ represent the number of total road-agents in a video, then we have $n = n_c + n_i$, where $n_c, n_i$ correspond to the number of agents that are avoidaing collisions and are interacting, respectively.




Increasing $n$ would increase the number of road-agents whose motion is modeled through collision avoidance or heterogeneous interaction formulations. Linear models do not account for either formulation. Standard RVO only accounts for collision avoidance. SimCAI models both. Therefore, we rationalize that,

\begin{equation*} \label{eq1}
\scalebox{0.7}{
\begin{math}
\begin{aligned}
       & {MOTA}_c^{\rm{linear}} \leq {MOTA}_c^{\rm{RVO}} \approx {MOTA}_c^{\rm{SimCAI}} \\
      &{MOTA}_i^{\rm{linear}} \leq {MOTA}_i^{\rm{RVO}} \leq {MOTA}_i^{\rm{SimCAI}}\\
    \implies &{MOTA}^{\rm{linear}} \leq {MOTA}^{\rm{RVO}} \leq {MOTA}^{\rm{SimCAI}}
\end{aligned}
\end{math}
}
\end{equation*}

We validate the analysis presented here in Section \ref{subsec:results}.

\begin{figure}
    \centering
    \includegraphics[width=\columnwidth]{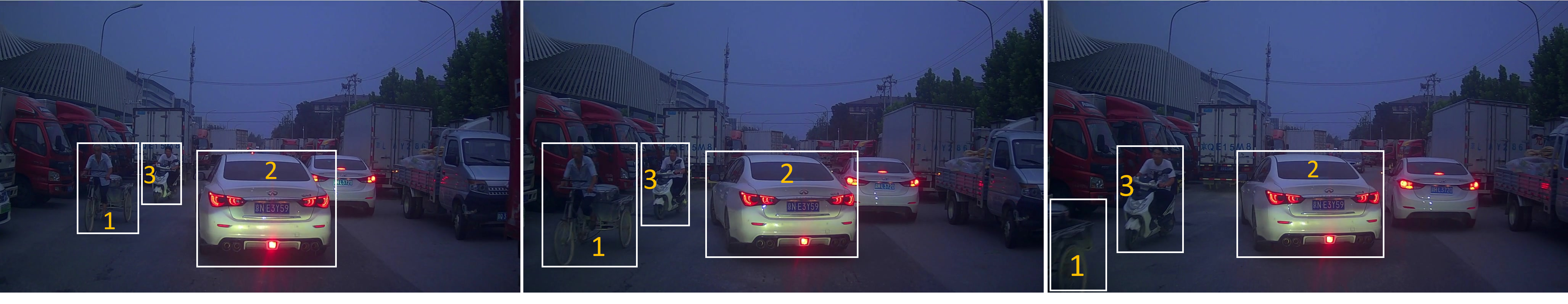}
    \caption{Qualitative analysis of RoadTrack on the TRAF dataset at night time consisting of cars, 2-wheelers, 3-wheelers, and trucks. Frames are chosen with a gap of 2 seconds($\sim$ 60 frames). For visual clarity, each road-agent is associated with a unique ID number. The ID is displayed in orange. Note the consistencies in the ID, for example, the 3-wheeler (1), car (2), and 2-wheeler (3).}
    \label{fig:traf23image}
    \vspace{-15pt}
\end{figure}

\textbf{Runtime Analysis:} At approximately 30 fps, we achieve a minimum speed-up of approximately 4$\times$, and upto approximately 30$\times$, over \sota methods on the MOT dataset~(Table~\ref{table3}). The selection of \sota methods is done in Section~\ref{subsec:methods}. The \sota use RNNs to model the motion of road-agents~\cite{online152,rtdl5-online153}, while we use the modified RVO formulation. We exploit the geometrical formulation of SimCAI to state and prove the following theorem:

\begin{thm}
Given $\mc{P} = \{ \p | 1 \leq i \leq n \}$, that represents a set of $n$ road-agents in a traffic scene that may assume any shape, size, and agent-type, if $\textrm{state}_{\p}$ $\in$ \{ \emph{stationary, collision avoiding, interacting} \}, $\forall i \in n$, then SimCAI can track the $n$ road-agents in $\bigO{(n_c + \omega n_i)}$, where $\omega << n_i$.
\label{theorem}
\end{thm}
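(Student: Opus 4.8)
The plan is to prove the bound by partitioning $\mc{P}$ according to the three admissible states and bounding the per-agent work in each class separately, then summing. Writing $n = n_s + n_c + n_i$ with $n_s$ the number of stationary agents, I would first dispose of the stationary case: such an agent requires only a state copy with $v_\textrm{new} = 0$, costing $\bigO{(1)}$, so the $\bigO{(n_s)}$ total folds into the collision-avoidance term, since a stationary agent is a degenerate collision-avoider. The remaining task is to bound the collision-avoidance and interaction contributions by $\bigO{(n_c)}$ and $\bigO{(\omega n_i)}$ respectively.

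For the collision-avoiding agents, I would analyze the cost of a single invocation of the ORCA optimization from Section~\ref{sec3A}. The key observation is that the program $v_\textrm{new} = \argmin_{v \notin ORCA} \lVert v - v_\textrm{pref} \rVert$ carries one half-plane constraint per neighbor whose velocity obstacle is active. Under a bounded-local-density hypothesis the number of neighbors inside the ORCA radius of any agent is bounded by a constant independent of $n$; retrieving them through a spatial data structure is $\bigO{(1)}$ amortized, and solving a fixed-size linear program is $\bigO{(1)}$. Hence each collision-avoiding agent costs $\bigO{(1)}$, and summing over the $n_c$ such agents yields $\bigO{(n_c)}$.

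For the interacting agents, I would bound the intent/ability/interaction pipeline of Sections~\ref{3B1}--\ref{3B3}. Testing the intent condition and evaluating the two geometric predicates $\Omega_1$ and $\Omega_2$ are each $\bigO{(1)}$: $\Omega_1$ reduces to the sign of the discriminant of the quadratic obtained by intersecting a ray with the circle $\zeta$, and $\Omega_2$ to the sign of the product $r_1(p_k) \cdot r_2(p_k)$. The only super-constant work arises when several agents compete to interact with the same $p_k$, in which case Section~\ref{3B3} forms the candidate set $\mc{Q}$ and selects the winner by $\argmin_{w} \lVert (p_\omega + v_\omega \Delta t) - p_k \rVert$. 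Defining $\omega$ as the maximum cardinality of such a competing set, evaluating the $\argmin$ costs $\bigO{(\omega)}$ per interacting agent, giving $\bigO{(\omega n_i)}$ overall. Adding the two bounds establishes $\bigO{(n_c + \omega n_i)}$.

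The \textbf{main obstacle} is making the two constant-work claims precise. For collision avoidance, everything rests on the bounded-local-density hypothesis: I must argue that the social and public radii together with the agent footprints force the number of simultaneously active ORCA half-planes to stay bounded, so that the per-agent cost is genuinely $\bigO{(1)}$ rather than $\bigO{(n_c)}$; without this the argument degrades to the standard quadratic RVO bound. For interactions, the claim $\omega \ll n_i$ is a geometric locality statement — because interaction is gated by the social distance, only agents within a small neighborhood of $p_k$ can belong to $\mc{Q}$ — which I would justify by the same bounded-density argument, noting that $\omega$ is controlled by how many agents can simultaneously occupy one social region rather than by the global count $n_i$.
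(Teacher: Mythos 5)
Your proposal is correct and follows essentially the same route as the paper: partition the agents by state into $n_c$ collision-avoiders and $n_i$ interacting agents (discarding stationary ones), charge $\mathcal{O}(n_c)$ to the ORCA/RVO step and $\mathcal{O}(\omega n_i)$ to the per-agent optimization over the competing set $\mathcal{Q}$, then sum. The only difference is one of detail: where you derive the linear collision-avoidance cost from a bounded-local-density hypothesis, the paper simply cites the proven $\mathcal{O}(n)$ runtime of RVO, so the ``main obstacle'' you flag is resolved there by appeal to prior work rather than by argument.
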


\begin{proof}
RVO is based on linear programming and can perform tracking with a proven runtime complexity of $\bigO{(n)}$~\cite{van2011reciprocal}. Now, if we assume that agents always assume one of the following states: stationary, avoiding collision, or interacting, then we have $n = n_c + n_i$, where $n_c, n_i$ correspond to the number of agents in collision avoidance states and interacting states, respectively. We ignore stationary road-agents. Following the formulation in Section~\ref{sec3B}, for each interacting road-agent, SimCAI predicts a new velocity by solving a linear optimization problem over $\omega$ road-agents. Thus, the runtime complexity of SimCAI is $\bigO{(n_c + \omega n_i)}$, where $\omega << n_i$.
\end{proof}


Our high fps is a consequence of our linear runtime complexity and we validate our theoretical claims in Section~\ref{sec5}. We further hypothesize that prior deep learning-based methods~\cite{online152,rtdl5-online153} are less optimal in terms of runtime due to the intensive computation requirements by deep neural networks~\cite{lstmruntime,elstm}. For example, ResNet~\cite{resnet} needs more than 25 MB for storing the computed model in memory, and more than 4 billion float point operations (FLOPs) to process a single image of size 224$\times$224~\cite{lstmruntime}.

We would like to clarify that by realtime performance, we refer to the realtime computation of the tracking algorithm only. We do not consider the computation time of Mask R-CNN. This is standard practice by tracking-by-detection algorithms~\cite{rtdl5-online153} that only contribute to the tracking component, similar to this work. We therefore compare with realtime tracking algorithms.

\section{Experiments}
\label{sec5}


\subsection{Datasets}
We highlight the performance of RoadTrack through extensive experiments on different traffic datasets.
\textbf{(Dense) TRAF Dataset: }We use the TRAF traffic dataset~\cite{traPHic} that consists of a set of $60$ video sequences that contain dense traffic with highly heterogeneous agents with front and top-down viewpoints, stationary and moving camera motions, and during both day and night. These videos are of highway and urban traffic in high population countries like China and India. Most importantly, ground truth annotations consisting of 2-D bounding box coordinates and agent types are provided with the dataset. The key aspects of this dataset are the high density and the heterogeneity. 

\textbf{(Sparse) MOT \& KITTI-16 Datasets: }There are now several popular open-source tracking benchmarks available on which researchers can test and compare the performance of tracking algorithms. The current \sota~ benchmark is the MOT benchmark~\cite{mot16}, which contains a mix of pedestrians and traffic sequences. However, the MOT benchmark is a general tracking benchmark dataset. Therefore, we additionally conduct experiments exclusively on the KITTI-16 traffic sequence~\cite{kitti}. It should be noted that the KITTI-16 sequence is sparse, consisting of mostly cars, and does not contain road-agent interactions.


\subsection{Evaluation Methods}
\label{subsec:methods}
Due to the open-source nature of the MOT benchmark, there are a large number of methods available in the MOT benchmark~(80 and 87 on 2D MOT15 and MOT16, respectively). To demonstrate the superiority of RoadTrack, it is therefore sufficient to select \sota methods from all the methods, and compare RoadTrack against this set of methods. We define a \sota method as one that satisfies all of the following criteria simultaneously:

\begin{enumerate}[noitemsep]
    \item Higher Average Rank: The MOT benchmark assigns an ``average rank'' to each method. The average rank (\cite{rank}, page 390) of a tracking algorithm is computed by averaging over all the metrics. This metric effectively ranks the overall performance of a tracking algorithm by taking all the metrics into account simultaneously. We select competitor methods that have a \textit{higher (better)} average rank than ours.
    \item Published Work: Many of the tracking methods submitted on the MOT benchmark are anonymous. We therefore select methods that are published in peer-reviewed conferences and journals.
    \item Online Tracking: RoadTrack performs tracking using only the information from the previous frame and assumes no knowledge of future frames, thus making it an online tracking method. Therefore, we compare RoadTrack with top-performing online methods for fair comparison. 
    \item Realtime Performance: RoadTrack has realtime performance and performs tracking at up to approximately 30 fps (see Tables~\ref{table3},\ref{tab:compare_on_MOT}). Note that by realtime performance, we refer to the realtime computation of the tracking algorithm only. We do not consider the computation time of Mask R-CNN. This is conventionally accepted by tracking-by-detection methods that optimize only the tracking component. We therefore compare with algorithms that also compute tracking in realtime.
\end{enumerate}

The methods that satisfy these criteria are listed in Tables~\ref{table3} and~\ref{tab:compare_on_MOT}. For evaluation on the TRAF dataset, however, one additional criterion is required: the availability of open-sourced code. There are only two methods (Table~\ref{comparison}) that satisfy all of the above criteria.

We point out that in selecting methods to compare with for each dataset according to the above criteria, all tables need not have the same selection of methods. For example, the methods in Tables~\ref{table3},\ref{tab:compare_on_MOT} do not have open-sourced code.

\subsection{Evaluation Metrics}

We use standard tracking metrics defined in~\cite{clear}. We compare the overall accuracy (MOTA) which is computed using the formula: $\textrm{MOTA}=\frac{1-(\textrm{FN}+\textrm{FP}+\textrm{IDS})}{\textrm{GT}}$ where FN, FP, IDS, and GT correspond to the number of false negatives, false positives, ID switches, and ground truth agents, respectively. Additionally, we report the number of mostly tracked (MT) and mostly lost (ML) agents as well as the precision of the detector (MOTP), as per their provided definitions in~\cite{clear}. In accordance with the strict annotation protocol adopted by the MOT benchmark, we do not count stationary agents such as parked vehicles in our formulation. Detected objects such as traffic signals are thus considered false positives. 

\begin{table}[!htb]
  \centering
  \resizebox{\columnwidth}{!}{%

  \begin{tabular}{lccccccccc}
  \toprule
    Dataset & Tracker & FPS$\uparrow$ & MT(\%)$\uparrow$ & ML(\%)$\downarrow$ & IDS$\downarrow$ & FN$\downarrow$ & MOTP(\%)$\uparrow$ & MOTA(\%)$\uparrow$ \\
    \midrule
    \multirow{3}{*}{TRAF1} & MOTDT & 37.9& 0 & 98.2 & \textbf{15 (\textless 0.1\%)} & 18,764 (33.0\%) & \textbf{63.3} & 67.0 \\
    & MDP & 9.3 & 0 & 98.2 & 21 (\textless 0.1\%)  & 18,667 (32.8\%) & 60.1 & 67.1 \\
    & \textbf{RoadTrack}& \textbf{43.9} & 0 & \textbf{95.6} & 163 (0.3\%) & \textbf{17,953 (31.6\%)} & 58.8 & \textbf{68.1} \\
    \cline{2-9}
    \multirow{3}{*}{TRAF2} & MOTDT \Tstrut & \textbf{41.6} & 0 & 98.8 & 17 (\textless 0.1\%) & 18,201 (32.7\%) & 60.3 & 67.3 \\
    & MDP & 20.9 & 0 & 100.0 & \textbf{7 (\textless 0.1\%)} & 18,105 (32.5\%) & 59.6 & 67.5 \\
    & \textbf{RoadTrack} & 12.3 & 0 & \textbf{92.3} & 55 (0.1\%) & \textbf{17,202 (30.9\%)} & \textbf{60.8} & \textbf{69.0} \\
    \cline{2-9}
    \multirow{3}{*}{TRAF3} & MOTDT \Tstrut & 50.7 & 3.3 & 67.1 & 64 (\textless 0.1\%)  & 34,883 (27.0\%) & 69.6 & 72.9 \\
    & MDP & \textbf{51.8} & 0 & 100.0 & \textbf{0 (0.0\%)} & 43,057 (33.3\%) & 69.2 & 66.7 \\
    & \textbf{RoadTrack} & 36.6 & \textbf{32.2} & \textbf{40.0} & 62 (\textless 0.1\%) & \textbf{19,521 (15.1\%)} & \textbf{70.1} & \textbf{84.8} \\
    \cline{2-9}
    \multirow{3}{*}{TRAF4} & MOTDT \Tstrut& 36.6 & 1.2 & 76.3 & 123 (0.1\%)  & 54,849 (29.0\%) & 65.3 & 70.9 \\
    & MDP & 9.0 & 1.2 & 87.2 & \textbf{16 (\textless 0.1\%)}  & 59,097 (31.3\%) & \textbf{66.2} & 68.7 \\
    & \textbf{RoadTrack} & \textbf{40.6} & \textbf{6.0} & \textbf{54.6} & 266 (0.1\%) & \textbf{47,444 (25.1\%)} & 65.1 & \textbf{74.7} \\
    \cline{2-9}
    \multirow{3}{*}{TRAF5} & MOTDT \Tstrut& 36.0 & 0.7 & 75.9 & 221 (0.2\%)  & 33,774 (28.9\%) & 63.2 & 70.9 \\
    & MDP & 22.5 & 0 & 98.4 & \textbf{6 (\textless 0.1\%)} & 38,091 (32.6\%) & \textbf{64.9} & 67.3 \\
    & \textbf{RoadTrack} & \textbf{41.4} & \textbf{1.5} & \textbf{55.7} & 299 (0.3\%) & \textbf{24,860 (21.3\%)} & 63.1 & \textbf{78.4} \\
    \cline{2-9}
    \multirow{3}{*}{TRAF6} & MOTDT \Tstrut& \textbf{33.0} & 0 & 87.5 & 161 (0.1\%) & 58,212 (29.4\%) & 63.3 & 70.5 \\
    & MDP & 4.3 & 0 & 99.3 & \textbf{0 (0.0\%)} & 65,687 (33.2\%) & \textbf{68.6} & 66.8 \\
    & \textbf{RoadTrack} & 14.6 & \textbf{0.7} & \textbf{67.8} & 283 (0.1\%) & \textbf{52,017 (26.3\%)} & 62.8 & \textbf{73.6} \\
\midrule
\multirow{3}{*}{Summary} \Tstrut & MOTDT & \textbf{34.7} & 0.9 & 83.6 & 601 (0.1\%)  & 218,683 (29.3\%) & 65.5 & 70.6 \\
    & MDP & 10.1 & 0.2 & 97.0 & \textbf{50 (\textless 0.1\%)} & 242,704 (32.6\%) & 65.3 & 67.4 \\
    & \textbf{RoadTrack} & 31.6 & \textbf{7.0} & \textbf{66.9} & 1128 (0.2\%)  & \textbf{178,997 (24.0\%)} & \textbf{65.7} & \textbf{75.8} \\
    \bottomrule
  \end{tabular}
  }

  \caption{Evaluation on the TRAF dataset with MOTDT~\cite{rtdl3} and MDP \cite{xiang2015learning}. MOTDT is currently the best \textit{online} tracker on the MOT benchmark with open-sourced code. Bold is best. Arrows ($\uparrow, \downarrow$) indicate the direction of better performance. \textbf{Observation:} RoadTrack improves the accuracy (MOTA) over the state-of-the-art by 5.2\% and precision (MOTP) by 0.2\%.}
  \label{comparison}
\end{table}


 




 

\begin{table}[t]
  \centering
 \resizebox{\columnwidth}{!}{
 \begin{tabular}{clccccccc}
\toprule
& Tracker & FPS$\uparrow$  & MT(\%)$\uparrow$  & ML(\%)$\downarrow$ & IDS$\downarrow$  & FN$\downarrow$ & MOTP(\%)$\uparrow$ & MOTA(\%)$\uparrow$                \\ 
\midrule

& AP\_HWDPL\_p \Tstrut \cite{online151} & 6.7 & 17.6 & 11.8& 18&831   & \textbf{72.6} & 40.7 \\
 
\multirow{3}{*}{\rotatebox{90}{KITTI-16}}& RAR\_15\_pub \cite{rtdl5-online153} & 5.4  & 0.0  & 17.6& 18 &809  & 70.9 & 41.2                                   \\

& AMIR15 \cite{online152}& 1.9& 11.8& 11.8& 18 &714& 71.7 & \textbf{50.4 }             \\

& HybridDAT \cite{online154}& 4.6& 5.9& 17.6& \textbf{10}&706& \textbf{72.6} & 46.3            \\ 

& AM \cite{rtdl4} &  0.5 & 5.9 & 17.6& 19  &805& 70.5 & 40.6      \\
& \textbf{RoadTrack} \Tstrut & \textbf{28.9}  & \textbf{29.4} & \textbf{11.7} &15 &\textbf{668}& 71.3&12.2 
\\\bottomrule

\end{tabular}
}
  \caption{Evaluation on the KITTI-16 dataset from the MOT benchmark with \textit{online methods} that have an average rank higher than ours. RoadTrack is at least approximately 4$\times$ faster than prior methods. While we do not outperform on the MOTA metric, we still achieve the highest MT, ML, FN, and MOTP. We analyze our MOTA performance in Section~\ref{subsec:results}. Bold is best. Arrows ($\uparrow, \downarrow$) indicate the direction of better performance. The values for all methods correspond to the KITTI-16 sequence specifically, and not the entire 2D MOT15 dataset.}
 \vspace{-8pt}
  \label{table3}
\end{table}

\begin{table}[!htb]
  \centering
  \resizebox{\columnwidth}{!}{%
  \begin{tabular}{clccccccc}
  \toprule
    & Tracker & FPS$\uparrow$ & MT(\%)$\uparrow$ & ML(\%)$\downarrow$ & IDS$\downarrow$ & FN$\downarrow$ & MOTP(\%)$\uparrow$ & MOTA(\%)$\uparrow$ \\
    \midrule
    \multirow{5}{*}{\rotatebox{90}{2D MOT15}} & AMIR15 \cite{online152} & 1.9 & 15.8 & \textbf{26.8} & 1026 & 29,397 & 71.7 & 37.6 \\
    & HybridDAT \cite{online154}& 4.6 & 11.4 & 42.2 & 358 &31,140 & 72.6 & 35.0 \\
    & AM \cite{rtdl4} & 0.5 & 11.4 & 43.4 & \textbf{348} &34,848 & 70.5 & 34.3 \\
    & AP\_HWDPL\_p \cite{online151} & 6.7 & 8.7 & 37.4 & 586 &33,203 & 72.6 & \textbf{38.5} \\
    & \textbf{RoadTrack} \Tstrut & \textbf{28.9} & \textbf{18.6} & 32.7 & 429 &\textbf{27,499} & \textbf{75.6} & 20.0 \\
    \midrule
    \multirow{7}{*}{\rotatebox{90}{MOT16}} & EAMTT\_pub \cite{eamtt} \Tstrut & 11.8 & 7.9 & 49.1 & 965 & 102,452 & 75.1 & 38.8 \\
    & RAR16pub \cite{rtdl5-online153} & 0.9 & 13.2 & 41.9 & 648 &  91,173 & 74.8 & 45.9 \\
    & STAM16 \cite{rtdl4} & 0.2 & 14.6 & 43.6 & \textbf{473} &91,117 & 74.9 & 46.0 \\
    & MOTDT \cite{rtdl3} & \textbf{20.6} & 15.2 & 38.3 & 792  &85,431 & 74.8 & \textbf{47.6} \\
    & AMIR \cite{online152} & 1.0 & 14.0 & 41.6 & 774  & 92,856 & \textbf{75.8} & 47.2 \\
    & \textbf{RoadTrack} \Tstrut & 18.8 & \textbf{20.3} & \textbf{36.1} & 722 &\textbf{78,413} & 75.5 & 40.9 \\
    \bottomrule
  \end{tabular}
  }
\caption{Evaluation on the full MOT benchmark. The full MOT dataset is sparse and is not a traffic-based dataset. RoadTrack is at least approximately 4$\times$ faster than previous methods. While we do not outperform on the MOTA metric, we still achieve the highest MT, ML (MOT16), FN, and MOTP(MOT15). We analyze our MOTA performance in Section~\ref{subsec:results}. Bold is best. Arrows ($\uparrow, \downarrow$) indicate the direction of better performance. }
  \label{tab:compare_on_MOT}
  \vspace{-5pt}
\end{table}






\begin{table}[!htb]
  \centering
  \resizebox{\columnwidth}{!}{
  \begin{tabular}{lccccccc}
  \toprule
   Motion Model  & FPS$\uparrow$ & MT(\%)$\uparrow$ & ML(\%)$\downarrow$& IDS$\downarrow$ & FN$\downarrow$& MOTP(\%)$\uparrow$  & MOTA(\%)$\uparrow$ \\    \midrule
     Const. Vel  & 30& 0.0 & 100 & \textbf{11}  & 247,738(33.3\&) & \textbf{66.3} & 66.7  \\

       SF & 30 & 0.1 & 98.6 & 147 &  246,528 (33.1\%) & 63.8 & 66.3  \\

       RVO & 30& 0.0 & 100 & 38  & 247,675 (33.2\%) & 63.8 & 66.9 \\
        \textbf{SimCAI} &\textbf{30} & \textbf{7.0} & \textbf{66.9} & 1128 & \textbf{178,997 (24.0\%)} & 65.7 & \textbf{75.8} \\

                \bottomrule

  \end{tabular}
  }
  \caption{Ablation experiments to show the advantage of SimCAI. We replace SimCAI with a constant velocity (Const Lin Vel) \cite{deepsort}, Social Forces (SF) \cite{helbing1995social}, and RVO motion model (RVO)\cite{van2011reciprocal}. The rest of the method is identical to the original method. All variations operate at similar fps of approximately 30 fps. Bold is best. Arrows ($\uparrow, \downarrow$) indicate the direction of better performance.}
  \label{table4}
  \vspace{-15pt}
\end{table}

\subsection{Analysis of Results \& Discussion}
\label{subsec:results}
\textbf{On Dense Datasets: }We provide results on the TRAF dataset using RoadTrack and demonstrate a \sota average MOTA of 75.8\% (Table~\ref{comparison}). The aim of this experiment is to highlight the advantage of our overall tracking algorithm in dense and heterogeneous traffic. We compare RoadTrack with methods (selected according to criteria established in Section~\ref{subsec:methods}) on the dense TRAF dataset in Table~\ref{comparison}. MOTDT \cite{rtdl3} and MDP \cite{xiang2015learning} are the only state-of-the-art methods with available open-source code. All methods are evaluated using a common set of detections obtained using Mask R-CNN. Compared to these methods, we improve upon MOTA by 5.2\% on absolute. This is roughly equivalent to a rank difference of 46 on the MOT benchmark. 

MOTDT is currently the fastest method (according to the selection criteria of Section~\ref{subsec:methods}) on the MOT16 benchmark. Our approach operates at realtime speeds upto approximately 30 fps and is comparable with MOTDT (Table~\ref{comparison}). Our realtime performance results from the runtime analysis from Section~\ref{sec3C} and theorem~\ref{theorem}.

Note that we observe an abnormally high number of identity switches compared to other methods; however, this is because prior methods mostly fail to maintain an agent's track for more than 20\% of their total visible time (near 100\% ML). Not being able to track road-agents for most of the time excludes those agents as possible candidates for IDS, thereby resulting in lower IDS for prior methods. Interestingly, the low IDS score for prior methods also contributes to their reasonably high MOTA score, despite near-failure to track agents in dense traffic.

\textbf{On Standard Benchmarks: }
In the interest of completeness and thorough evaluation, we also evaluate RoadTrack on sparser tracking datasets and present results on both traffic-only datasets (KITTI-16) in Table~\ref{table3} as well as datasets containing only pedestrians (MOT) in Table~\ref{tab:compare_on_MOT}. RoadTrack's main advantage is SimCAI, which is based on modeling collision avoidance and interactions. In the absence of one or both, we do not expect it demonstrate  superior performance over prior methods on the sparse KITTI-16 and MOT datasets. 
%
While not conclusive, we believe our low MOTA score on the 2D MOT15 and KITTI-16 may also be attributed to a high number of detections that are incorrectly classified as false positives. For instance, road-agents that are too distant to be manually labeled are not annotated in the ground truth sequence. We observed this to be true for the methods we compared with as well. Therefore, we exclude FP from the calculation of MOTA for all methods in the interest of fair evaluation. 

We note, however, that RoadTrack is least 4$\times$ faster on the KITTI-16 and 2D MOT15 datasets at approximately 30 fps (Tables~\ref{table3},\ref{tab:compare_on_MOT}). To explain the speed-up, we refer to theorem~\ref{theorem} and the runtime analysis presented in Section~\ref{sec3C}. We specially point to the 15$\times$ and 5$\times$ speed-up over learning-based tracking methods,~\cite{online152,rtdl5-online153} in Table~\ref{table3} which we attribute the linear time computation of SimCAI as opposed to the intensive computation required by deep learning models.

\textbf{Ablation Experiments:}
We highlight the advantages of SimCAI through ablation experiments in Table~\ref{table4}. The aim of these experiments is to isolate the benefit of SimCAI. We compare with the following variations of RoadTrack in which we replace our novel motion model SimCAI with standard and \sota motion models, while keeping the rest of the system untouched:
\begin{itemize}[noitemsep]
    \item Constant Linear Velocity (Const Lin Vel). We replace SimCAI with a constant velocity linear motion model~\cite{deepsort}.
    \item Social Forces (SF). We replace SimCAI with the Social Forces motion model \cite{helbing1995social}.
    \item Reciprocal Velocity Obstacles (RVO) \cite{van2011reciprocal}. We replace SimCAI with the RVO motion model.
\end{itemize}

We compare SimCAI with other motion models (Constant linear velocity, Social Forces, and RVO) on the dense TRAF dataset. These experiments were performed by \textit{only} replacing SimCAI with other motion models, keeping the rest of the system unchanged. We observe that SimCAI outperforms the motion models by at least 8.9\% on absolute on MOTA. All the variations used in the ablation experiments operated at the same fps of approximately 30 fps. Additionally, we experimentally verify the analysis of Section~\ref{sec3C} by observing that ${MOTA}^{\rm{linear}} \leq {MOTA}^{\rm{RVO}} \leq {MOTA}^{\rm{SimCAI}}$. 
Once again, we point to our high IDS in Table~\ref{table4}, compared to the IDS of other motion models. As mentioned previously, this is due to the near-failure of other motion models (near 100\% ML) to track road agents in dense traffic. Not being able to track a road-agent excludes them as a IDS candidate.

\vspace{-5pt}
\section{Limitations and Future Work}
\label{sec6}
\vspace{-5pt}

There are many avenues of future work for our presented work. Currently, many parameters in our algorithm such as the radii for the social and public regions, steering angles, and cone angles, are heuristically chosen for optimum performance. It would be more efficient to learn these parameters instead, using data driven and machine learning techniques. Furthermore, the results from tracking road-agents can be directly used to further research in related areas such as trajectory prediction. With the increased popularity of deep-learning and improved tracking methods, deep learning techniques can be employed for predicting the future motion of road-agents in dense and heterogeneous traffic. 
\section{Acknowledgements}

This work was supported in part by ARO Grants W911NF1910069 and W911NF1910315, Semiconductor Research Corporation (SRC), and Intel.


\bibliographystyle{unsrt}
\bibliography{references}



\end{document}